\documentclass[letterpaper]{article} 
\usepackage{aaai2026}  
\usepackage{times}  
\usepackage{helvet}  
\usepackage{courier}  
\usepackage[hyphens]{url}  
\usepackage{graphicx} 
\usepackage{natbib}  
\usepackage{caption} 
\usepackage{algorithm}
\usepackage{algorithmic}
\usepackage{amsmath, amssymb}
\usepackage{amsthm}
\newtheorem{theorem}{Theorem}

\usepackage{amsmath}
\usepackage{booktabs}
\usepackage{multirow}

\usepackage{newfloat}
\usepackage{listings}
\DeclareCaptionStyle{ruled}{labelfont=normalfont,labelsep=colon,strut=off} 
\floatstyle{ruled}
\newfloat{listing}{tb}{lst}{}
\floatname{listing}{Listing}
\newtheorem{prop}{Proposition}

\title{Post-Optimization Adaptive Rank Allocation for LoRA}
\author {
    Vishnuprasadh Kumaravelu\textsuperscript{\rm 1,2},
    Sunil Gupta\textsuperscript{\equalcontrib \rm 2},
    P. K. Srijith\textsuperscript{\equalcontrib \rm 1}
}
\affiliations {
    \textsuperscript{\rm 1}Indian Institute of Technology Hyderabad \ \ 
    \textsuperscript{\rm 2}Deakin University\\
    s224229262@deakin.edu.au, \ sunil.gupta@deakin.edu.au, \  srijith@cse.iith.ac.in
}

\begin{document}

\maketitle

\begin{abstract}
Exponential growth in the scale of modern foundation models has led to the widespread adoption of Low-Rank Adaptation (LoRA) as a parameter-efficient fine-tuning technique. However, standard LoRA implementations disregard the varying intrinsic dimensionality of model layers and enforce a uniform rank, leading to parameter redundancy. We propose Post-Optimization Adaptive Rank Allocation (PARA), a data-free compression method for LoRA that integrates seamlessly into existing fine-tuning pipelines. PARA leverages Singular Value Decomposition to prune LoRA ranks using a global threshold over singular values across all layers. This results in non-uniform rank allocation based on layer-wise spectral importance. As a post-hoc method, PARA circumvents the training modifications and resulting instabilities that dynamic architectures typically incur. We empirically demonstrate that PARA reduces parameter count by 75-90\% while preserving the predictive performance of the original, uncompressed LoRA across multiple vision and language benchmarks. Code will be published upon acceptance.
\end{abstract}


\section{Introduction}
Large pretrained models have induced a paradigm shift in the field of Artificial Intelligence. Trained on massive and diverse data corpora, these models demonstrate strong generalization and zero-shot capabilities \cite{gpt2}. As a result, practitioners increasingly opt to finetune these models on domain-specific datasets rather than training from scratch. As the scaling laws \cite{scalinglaws} continue to drive models past the trillion-parameter mark \cite{kimik2}, full-parameter finetuning has become computationally prohibitive. Moreover, finetuning models pretrained on internet-scale datasets on much smaller domain datasets is often sample-inefficient and prone to overfitting. This has led to the rise of Parameter Efficient Fine-Tuning (PEFT) methods \cite{peft}, which propose adapting such large models using a much smaller fraction of their parameters.

Among PEFT methods, Low-Rank Adaptation (LoRA) \cite{lora} has gained massive traction due to its simple architecture and strong empirical performance. LoRA is founded on the hypothesis that the updates induced by fine-tuning lie in a low-rank subspace. LoRA decomposes the weight update to each pretrained weight matrix as the product of two low-rank matrices, keeping the original pretrained parameters frozen and optimizing only the low-rank matrices during adaptation. Formally, for any linear layer $W \in \mathbb{R}^{d_1 \times d_2}$ in the pretrained model, LoRA reparameterizes its update $\Delta W$ as $\Delta W = BA$, with $B \in \mathbb{R}^{d_1 \times r}$ and $A \in \mathbb{R}^{r \times d_2}$ being the low rank matrices and rank, $r \ll \min(d_1,d_2)$.

The choice of rank dictates both the parameter budget and adaptation capacity. While LoRA can match full fine-tuning performance given a sufficiently large rank \cite{lorawithoutregret}, empirical evidence shows that fine-tuning performance is highly sensitive to this choice \cite{dylora}. Excessive rank yields diminishing returns and can lead to overfitting, while insufficient rank limits expressivity. Despite its significance, selecting the right rank remains a heuristic process requiring computationally expensive hyperparameter sweeps that undermines the efficiency gains that motivate PEFT methods in the first place.

This challenge becomes particularly acute in production environments. While it is common for simple single-task deployments to merge LoRA with the base model to eliminate additional inference latency, modern ML systems increasingly operate in multi-tenant regimes where a frozen backbone supports thousands of distinct task-specific adapters \cite{phlora}. In such settings, adapters are dynamically swapped into GPU memory on demand, and adapter size directly bottlenecks system throughput through memory capacity and bandwidth constraints. The rank selection problem thus manifests as both a training-time optimization challenge and a deployment-time efficiency bottleneck.

Beyond rank allocation, the optimal location for LoRA updates remains an open question. Early work restricted adaptation to attention weights \cite{lora}, but QLoRA \cite{qlora} later argued that including MLP layers is critical for performance. Conversely, \citeauthor{loralearnsless} \cite{loralearnsless} found adaptation of attention layers redundant when MLPs are tuned. Methods such as AdaLoRA \cite{adalora}, SoRA \cite{sora}, and DoRA \cite{dora} propose to automate this selection by treating rank and layer selection as a joint optimization problem and dynamically pruning ranks during training. However, these adaptive approaches introduce significant complexity, relying on architectural modifications, expensive parameter-importance calculations, and intricate pruning schedules that can destabilize training if improperly tuned. Consequently, the burden shifts from selecting a fixed rank to tuning a complex suite of pruning schedules and regularization constants, effectively increasing the hyperparameter search space rather than reducing it.

To address these challenges, we propose Post-Optimization Adaptive Rank Allocation (PARA). PARA is a data-free compression framework for pre-trained LoRA adapters. Unlike dynamic methods, PARA imposes no architectural changes or training overhead. For PARA, we recommend training all transformer layers with standard LoRA at a sufficiently high rank to ensure capacity during optimization, relying on the post-hoc compression to discard extra ranks and redundant adapter layers. Upon training completion, PARA applies Singular Value Decomposition (SVD) to each adapter's weight matrix. This decomposition separates the learned update $\Delta W$ into singular vectors, which encode the transformation directions, and singular values, which quantify the magnitude of those transformations. Consequently, the singular values serve as a direct proxy for the importance of each rank dimension. A single global threshold is then applied to the collective singular values across the entire model. This threshold is determined by the practitioner's choice of either a target average rank (given the ranks vary across layers) or a relative energy retention ratio (the fraction of the total Frobenius norm preserved). This mechanism automatically allocates higher ranks to layers with a greater concentration of globally large singular values while aggressively pruning, or even entirely discarding, layers with negligible singular values. 

Crucially, PARA's adaptive rank allocation consistently outperforms LoRA adapters natively trained at the equivalent parameter budget. Standard LoRA enforces a uniform rank constraint, bottlenecking layers that require higher expressivity to model complex features, while simultaneously wasting parameters on layers that contribute little to the task. By decoupling the training rank from the final inference rank, PARA circumvents this limitation. Training in a high-rank regime allows the adapter to explore a larger subspace of solutions, leading to better convergence. Based on spectral importance, PARA then redistributes the parameter budget, yielding an adapter that is more accurate than its uniformly trained counterpart. The result is a non-uniform rank distribution (Figure~\ref{fig:heatmap}) derived analytically from the learned weights themselves, preserving the training stability and architectural simplicity of standard LoRA.

\begin{figure}
    \centering
    \includegraphics[width=0.99\linewidth]{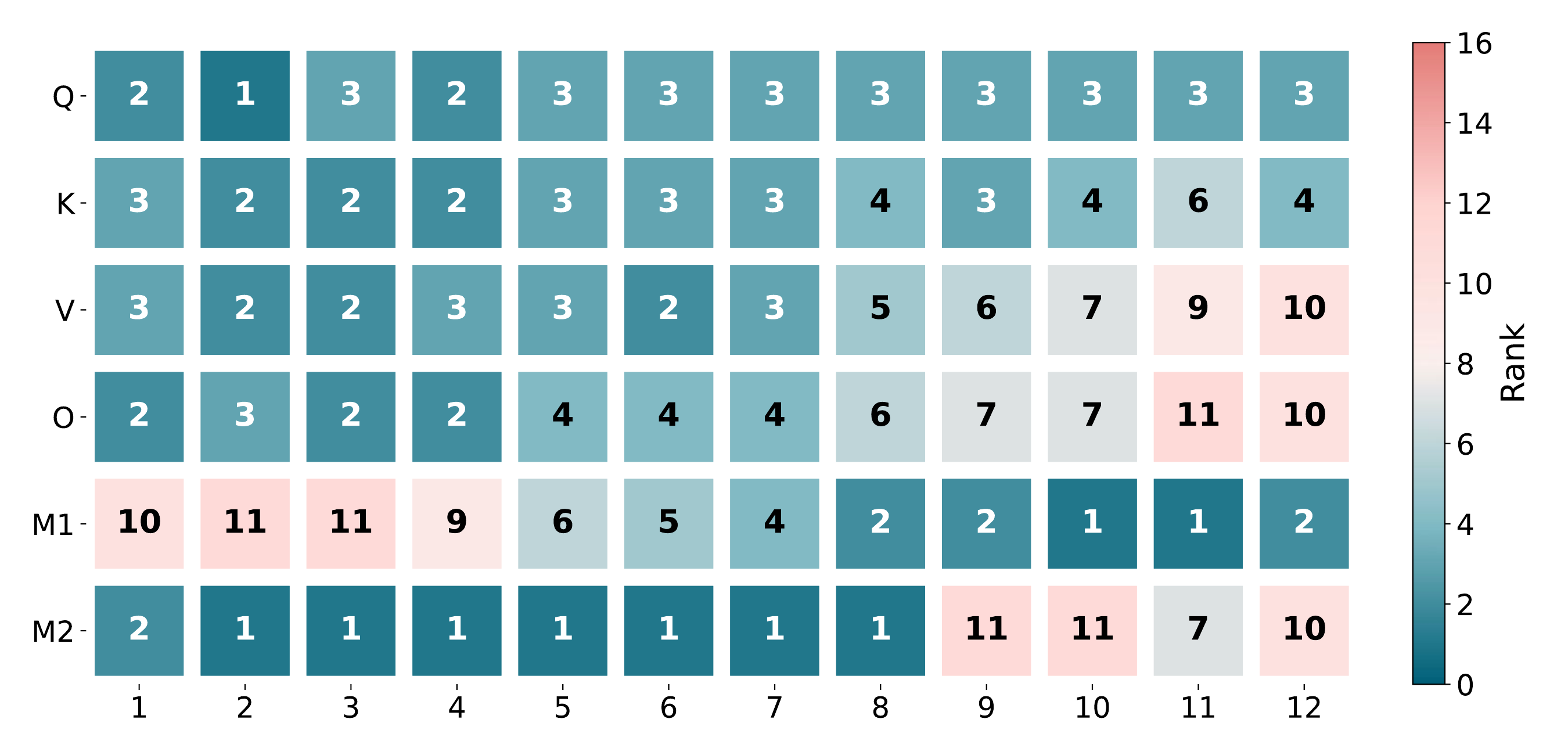}
    \caption{Rank distribution across layer types and depth on a LoRA trained at rank 16 on the Food 101 Image Classification task and compressed by PARA to an average rank of 4. PARA automatically allocates ranks based on spectral importance.}
    \label{fig:heatmap}
\end{figure}

PARA's use of singular values as an importance metric fundamentally contrasts with training-time pruning methods like AdaLoRA that rely on expensive gradient-based parameter importance calculations. However, a naive application of SVD to large Transformer weight matrices can be computationally expensive as well. To address this, PARA exploits the intrinsic low-rank structure of the adapters and performs SVD via QR decomposition. This strategy allows us to compute singular values without ever materializing the full weight matrix, reducing the computational complexity. To the best of our knowledge, PARA is the first framework to integrate this numerical optimization into the LoRA ecosystem. The result is a highly efficient process that yields compact adapters with minimal memory footprints, significantly lowering VRAM consumption and enabling higher concurrency in multi-tenant serving scenarios. PARA's efficiency enables a "Train First, Tune Later" workflow, where rank selection is decoupled from training and delegated to the deployment phase. This also enables a versatile ``one-to-many'' deployment strategy. Similar to DyLoRA~\cite{dylora}, a single high-rank LoRA parent can spawn a family of PARA-compressed child adapters at varying sizes to satisfy diverse latency constraints. However, unlike DyLoRA's uniform rank truncation, PARA's adaptive-rank adapters maximize performance for every target budget. Across various image classification, natural language understanding, and generation benchmarks, PARA achieves 75-90\% parameter reduction with less than 1\% accuracy degradation in most cases. PARA consistently outperforms existing adaptive rank methods such as AdaLoRA~\cite{adalora}, SoRA~\cite{sora}, and DoRA~\cite{dora}, as well as multi-rank methods such as DyLoRA~\cite{dylora} across benchmarks.

Our contributions can be summarized as follows:
\begin{enumerate}
    \item We introduce PARA, a data-free compression framework that automatically optimizes rank distribution using the singular value spectrum of learned updates.
    \item We establish a ``Train First, Tune Later'' paradigm that decouples training capacity from inference constraints to maximize both optimization stability and deployment efficiency.
    \item We enable versatile ``one-to-many'' deployment, allowing a single parent model to spawn a family of adaptive adapters for varying latency budgets.
    \item We exploit the intrinsic low-rank structure of LoRA adapters via QR-based SVD, enabling efficient spectral analysis without materializing full weight matrices.
    \item Our method outperforms baselines, achieving 75-90\% parameter reduction with negligible accuracy loss across diverse vision and language benchmarks.
\end{enumerate}

\section{Related Work}
\subsection{Adaptive Rank LoRA Variants}
The rigidity of LoRA's uniform rank assignment has spurred a variety of adaptive rank methods. \textbf{AdaLoRA} \cite{adalora} sets a universal parameter budget and leverages parameter importance to adaptively determine ranks for different layers. Based on a pruning schedule, AdaLoRA begins with a uniform rank distribution and progressively prunes less informative ranks until a target budget is reached, resulting in varying rank allocations. Crucially, AdaLoRA makes architectural changes to LoRA and adds additional regularization terms to the loss function. These modifications, along with expensive periodic parameter importance calculations and tuning the pruning schedule, complicate training, requiring multiple hyperparameter decisions and validations. \textbf{SoRA} \cite{sora} modifies LoRA by introducing a trainable gating vector between the LoRA matrices to progressively prune rank. It employs a proximal gradient method with $L_1$ regularization to update the gate, effectively performing soft-thresholding to promote sparsity during training. Post-training, zeroed-out ranks are pruned to yield a LoRA adapter with variable ranks. \textbf{DoRA} \cite{dora} reparameterizes LoRA into sums of single-rank components, dynamically pruning them based on their contribution to the total Frobenius norm of the update. To prevent instability, DoRA employs an additional regularization term that penalizes the variance of elements within components, to ensure a balanced magnitude distribution. SoRA and DoRA both incur additional training overhead in the form of architectural modifications, importance calculations, additional regularization terms, or bespoke optimization strategies. Unlike the methods discussed so far, which are training-time modifications to LoRA, \textbf{GoRA} \cite{gora} is an initialization framework. GoRA computes sensitivity-based parameter importance for each backbone layer using accumulated gradients from a subset of training samples and allocates ranks accordingly. In addition, it initializes matrix $A$ randomly as usual, but initializes matrix $B$ using a formula involving the pseudo-inverse of $A$ and the accumulated gradients to approximate the initial gradient update. Consequently, GoRA sits philosophically opposite to PARA. While GoRA is a data-driven initialization strategy designed to optimize LoRA's initialization before fine-tuning, PARA is a data-free post-optimization technique designed to compress LoRA after fine-tuning.

\subsection{Simultaneous Multi-Rank Training}

\textbf{DyLoRA} \cite{dylora} extends LoRA by training for a range of ranks simultaneously rather than a single fixed rank. Following Nested Dropout \cite{nested-dropout}, DyLoRA randomly samples a rank during training, truncates the LoRA matrices, and updates the parameters. This forces the adapters to learn ordered representations, ensuring critical information is concentrated in the lower ranks. At inference, DyLoRA can be deployed at any rank within the trained range, enabling dynamic adaptation to hardware constraints without retraining. PARA brings similar post-training dynamic rank adjustment benefits to standard LoRA without any bespoke training adjustments, solely relying on spectral importance to enable ordered pruning. 

\subsection{Spectral Decomposition in LoRA}
Recent works have explored leveraging Singular Value Decomposition in the context of LoRA to improve initialization. \textbf{PiSSA} \cite{pissa} initializes LoRA matrices with the principal singular components of the pretrained weight matrix, effectively allowing the adapter to optimize the principal subspace directly. On the contrary, \textbf{MiLoRA} \cite{milora} initializes adapters using minor components in order to mitigate forgetting of pretrained knowledge. While PiSSA and MiLoRA use SVD on pretrained matrices for initialization to aid convergence, PARA uses SVD on fine-tuned LoRA matrices to aid deployment efficiency. Furthermore, PARA is complementary to these approaches; one could theoretically initialize with PiSSA to accelerate training, and subsequently apply PARA to prune the resulting adapters for efficient inference. \textbf{PhLoRA} \cite{phlora} uses SVD to extract a LoRA adapter from a fine-tuned model by performing truncated SVD on the weight update $\Delta W$. While PhLoRA and PARA both use SVD as a compression technique, PhLoRA focuses on full model weights to generate regular LoRAs whereas PARA's focus is on redistributing the rank budget of LoRAs in order to maximize efficiency.

\section{Methodology} \label{sec:methodology}

Let $\mathcal{M}_\Theta$ be a pretrained transformer parameterized by weights $\Theta = \{W^y_i \ | \ y \in Y , i \in \{1,2,\dots,N\}\}$, where $Y = \{q, k, v, o, m_1, m_2\}$ denotes the layer types and $N$ is the number of transformer layers. Each transformer layer comprises the attention matrices: $q$ (query), $k$ (key), $v$ (value) and $o$ (out-projection), and the two MLP matrices $m_1$ and $m_2$ respectively. The biases are ignored for brevity. Assuming matrices of all types are adapted, let $\Phi = \{\phi^y_i \ | \ y \in Y , i \in \{1,2,\dots,N\}\}$ be the set of all LoRA adapters, where $\phi^y_i = B^y_i A^y_i$. We define the total rank budget of the model as $\mathcal{B} = \sum_{y,i} rank(\phi^y_i)$. Given that the initial rank $r$ is fixed across all matrices, $\mathcal{B}_{init} = r \cdot N \cdot |Y|$. Let $\mathcal{B}_{tgt} < B_{init}$ be the target budget after compression. As PARA compression results in variable ranks across matrices, we define the average target rank after compression $\bar{r} = \frac{{B}_{tgt}}{N|Y|}$.

\subsection{Post-Optimization Adaptive Rank Allocation}

PARA seeks to compress LoRA adapters by identifying and pruning redundant singular values and corresponding singular vectors globally across the network. We first consider the Singular Value Decomposition of each LoRA $\phi^y_i$:
\begin{equation}
    \phi^y_i = B^y_i A^y_i = U^y_i \Sigma^y_i (V^y_i)^T
\end{equation}
Since $\phi^y_i$ is the product of two rank-$r$ matrices, it has at most rank $r$. Thus, we compute the compact SVD where $\Sigma^y_i = diag(\sigma^y_{i,1}, \dots, \sigma^y_{i,r})$ contains the $r$ singular values and $U^y_i \in \mathbb{R}^{d_1 \times r}$ and $V^y_i \in \mathbb{R}^{d_2 \times r}$ contain the corresponding left and right singular vectors. 

Let $\mathcal{E} = \bigcup_{y,i}\{\sigma^y_{i,j}\}_{j=1}^r$ be the global set of all singular values across all adapters. To satisfy the target budget $\mathcal{B}_{tgt}$, we determine a global threshold $\tau$, such that exactly $\mathcal{B}_{tgt}$ singular values in $\mathcal{E}$ are greater than or equal to $\tau$. We construct the pruned diagonal matrix $\hat{\Sigma}^y_i$ by masking singular values below the global threhold $\tau$:
\begin{equation}
(\hat{\Sigma}^y_i)_{jj} = (\Sigma^y_i)_{jj} \cdot \mathbb{I}[(\Sigma^y_i)_{jj} \geq \tau] 
\end{equation}
where $\mathbb{I}(\cdot)$ is the identity function. We then reconstruct the LoRA adapters using the pruned singular values. To maintain the LoRA structure, we distribute the singular values symmetrically:
\begin{equation}
    U^y_i \hat{\Sigma}^y_i (V^y_i)^T = [U^y_i \sqrt{\hat{\Sigma}^y_i}] [\sqrt{\hat{\Sigma}^y_i} (V^y_i)^T] = \hat{B}^y_i \hat{A}^y_i = \hat{\phi}^y_i
\end{equation}
This results in adapters with heterogeneous ranks $0 \leq \hat{r}^y_i \leq r$, allocating more budget to layers with higher spectral energy and effectively removing layers where contributions are negligible

\subsection{SVD in the LoRA Subspace via QR Decomposition}

Directly computing the SVD of $\phi^y_i$ incurs a computational complexity of $O(d_1 d_2^2)$ where $d_1 \geq d_2$. For large pretrained transformers, this is prohibitively expensive. However, the architecture of LoRA enforces a strong inductive bias where the optimization is confined to a low-rank subspace $r << d_1,d_2$. Rather than multiplying $B^y_i$ and $A^y_i$ to materialize $\phi^y_i$, which by itself costs $O(d_1d_2r)$, and then performing an ambient-space decomposition, PARA exploits the low-rank structure directly. Following \cite{qr-svd}, we compute SVD via QR decomposition of the LoRA matrices without requiring the full matrix:
\begin{equation}
    B^y_i = Q_{B^y_i} R_{B^y_i}
    \label{eq:bqr}
\end{equation}
\begin{equation}
    (A^y_i)^T = Q_{A^y_i} R_{A^y_i} \implies A^y_i = R_{A^y_i}^T Q_{A^y_i}^T
    \label{eq:aqr}
\end{equation}
where $Q_{B^y_i} \in \mathbb{R}^{d_1 \times r}$ and $Q_{A^y_i} \in \mathbb{R}^{d_2 \times r}$ have orthonormal columns spanning the column space of B and the row space of A respectively and, $R_{B^y_i} \in \mathbb{R}^{r \times r}$ and $R_{A^y_i} \in \mathbb{R}^{r \times r}$ are upper triangular matrices. Substituting the QR forms in $\phi^y_i = B^y_i A^y_i$, we have:
\begin{equation} \label{eq:qr-in-lora}
    \phi^y_i = Q_{B^y_i} [R_{B^y_i} R^T_{A^y_i}] Q^T_{A^y_i} \\
\end{equation}
We define the interaction matrix $M^y_i = R_{B^y_i} R^T_{A^y_i}$. $M \in \mathbb{R}^{r \times r}$ encapsulates the interaction between the input and output subspaces entirely within the latent dimension $r$. We then compute the SVD of $M$, thus discarding the null space before decomposition rather than during it:
\begin{equation} \label{eq:svd-m}
    M^y_i = \tilde{U}^y_i \Sigma^y_i (\tilde{V}^y_i)^T
\end{equation}
Substituting Equation \ref{eq:svd-m} back in Equation \ref{eq:qr-in-lora}
\begin{equation}
\begin{split}
    \phi^y_i &= Q_{B^y_i} [\tilde{U}^y_i \Sigma^y_i (\tilde{V}^y_i)^T] Q^T_{A^y_i} \\
    &= [Q_{B^y_i} \tilde{U}^y_i] \Sigma^y_i [(\tilde{V}^y_i)^T Q^T_{A^y_i}] \\
    &= [Q_{B^y_i} \tilde{U}^y_i] \Sigma^y_i [Q_{A^y_i} \tilde{V}^y_i]^T \\
    &= U^y_i \Sigma^y_i (V^y_i)^T
\end{split}
\label{eq:svd_derivation}
\end{equation}
This approach is mathematically equivalent to the full SVD. 

\begin{prop}[Orthogonality of Derived Bases]
The matrices $U^y_i = Q_{B^y_i} \tilde{U}^y_i$ and $V^y_i = Q_{A^y_i} \tilde{V}^y_i$ form valid orthonormal bases for the Singular Value Decomposition of $\phi^y_i = B^y_i A^y_i$.
\end{prop}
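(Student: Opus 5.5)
The plan is to verify the three defining properties of a compact SVD directly: $U^y_i$ has orthonormal columns, $V^y_i$ has orthonormal columns, and $U^y_i \Sigma^y_i (V^y_i)^T = \phi^y_i$ with $\Sigma^y_i$ diagonal and nonnegative. The third property is exactly the chain of equalities in Equation~\ref{eq:svd_derivation}, which uses only the QR factorizations \eqref{eq:bqr}--\eqref{eq:aqr} and the SVD \eqref{eq:svd-m} of the interaction matrix $M^y_i$. The diagonal, nonnegative $\Sigma^y_i$ is inherited from the SVD of $M^y_i$. So the real content of the proposition is orthonormality.

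For orthonormality, I compute the Gram matrices and use only two facts: the thin QR factors have orthonormal columns, and the SVD factors of the square $r\times r$ matrix $M^y_i$ are orthogonal. Concretely, $Q_{B^y_i}^T Q_{B^y_i} = I_r$ and $\tilde{U}^y_i{}^T \tilde{U}^y_i = I_r$. Hence
\[
(U^y_i)^T U^y_i = (\tilde{U}^y_i)^T Q_{B^y_i}^T Q_{B^y_i} \tilde{U}^y_i = (\tilde{U}^y_i)^T \tilde{U}^y_i = I_r .
\]
The identical computation with $Q_{A^y_i}$ and $\tilde{V}^y_i$ gives $(V^y_i)^T V^y_i = I_r$. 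I will also note that the columns of $U^y_i$ lie in $\operatorname{col}(B^y_i)\supseteq\operatorname{col}(\phi^y_i)$, and those of $V^y_i$ lie in $\operatorname{row}(A^y_i)$. They are therefore bases adapted to $\phi^y_i$, not arbitrary orthonormal sets.

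The one genuine obstacle is rank deficiency. If $B^y_i$ or $A^y_i$ has rank below $r$, for instance when some LoRA directions collapse during training, the statement ``$Q_{B^y_i}$ has orthonormal columns spanning the column space of $B$'' must be read carefully. Householder-based thin QR still returns $Q$ with $r$ orthonormal columns, but $R$ is then singular and $Q$ spans a superspace of $\operatorname{col}(B)$. I will handle this by observing that the argument above never uses invertibility of $R$. It uses only $Q^TQ=I_r$ and the orthogonality of $\tilde U^y_i,\tilde V^y_i$, so orthonormality and the factorization identity both survive. In that case some entries of $\Sigma^y_i$ are zero, and the corresponding columns of $U^y_i,V^y_i$ span directions in the null space of $\phi^y_i$. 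This is harmless and consistent with the compact SVD of a matrix of rank at most $r$.

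Finally, to connect with ``equivalent to the full SVD,'' I will use uniqueness of singular values. Since $\phi^y_i = U^y_i\Sigma^y_i (V^y_i)^T$ with orthonormal $U^y_i,V^y_i$, we get $(\phi^y_i)^T\phi^y_i = V^y_i(\Sigma^y_i)^2 (V^y_i)^T$. The nonzero eigenvalues of this matrix are the squared diagonal entries of $\Sigma^y_i$. The singular values obtained from $M^y_i$ therefore coincide with those of $\phi^y_i$, which justifies using them in the global threshold $\tau$.
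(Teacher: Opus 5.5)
Your proof is correct and follows the same route as the paper, which likewise computes $(U^y_i)^T U^y_i = (\tilde{U}^y_i)^T Q_{B^y_i}^T Q_{B^y_i} \tilde{U}^y_i = I$ using only $Q_{B^y_i}^T Q_{B^y_i} = I$ and the orthogonality of $\tilde{U}^y_i$, handles $V^y_i$ symmetrically, and relies on the factorization in Equation~\ref{eq:svd_derivation} together with the invariance of singular values. Your treatment of rank-deficient $B^y_i$ or $A^y_i$ is extra care the paper omits, with one small slip: the columns of $U^y_i$ paired with zero singular values lie in the null space of $(\phi^y_i)^T$, not of $\phi^y_i$; only the corresponding columns of $V^y_i$ lie in the null space of $\phi^y_i$.
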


\begin{proof}
We verify the orthonormality of the left singular vectors $U^y_i$:
\begin{equation}
\begin{aligned}
    (U^y_i)^T U^y_i &= (Q_{B^y_i} \tilde{U}^y_i)^T (Q_{B^y_i} \tilde{U}^y_i) \\
    &= (\tilde{U}^y_i)^T (Q_{B^y_i}^T Q_{B^y_i}) \tilde{U}^y_i \\
    &= (\tilde{U}^y_i)^T \mathbb{I} \tilde{U}^y_i \quad \text{(since } Q_{B^y_i} \text{ has orthonormal columns)} \\
    &= (\tilde{U}^y_i)^T \tilde{U}^y_i \\
    &= \mathbb{I} \quad \text{(since } \tilde{U}^y_i \text{ is unitary)}
\end{aligned}
\end{equation}
The derivation for $V^y_i$ follows symmetrically. Thus, the decomposition $U^y_i \Sigma^y_i (V^y_i)^T$ satisfies all conditions of the SVD. Since the singular values $\Sigma^y_i$ are intrinsic to the operator $\phi^y_i$, they are invariant to the method of computation.
\end{proof}

Crucially, the computational cost is now determined by the QR decomposition of the $A$ and $B$ matrices, followed by the SVD of the interaction matrix $M$: $O(d_2r^2) + O(d_1r^2) + O(r^3)$. Thus, the computational costs are reduced to $O((d_1 + d_2)r^2 + r^3)$, where $r \ll d_1, d_2$. Given that $d_1$ and $d_2$ are typically in the thousands and $r \in [8,64]$, this optimization reduces the decomposition overhead by orders of magnitude, making PARA feasible for very large transformer models. We summarize the complete procedure for Post-Optimization Adaptive Rank Allocation in Algorithm \ref{alg:PARA}.

\begin{figure}[t]
    \centering
    \includegraphics[width=0.99\linewidth]{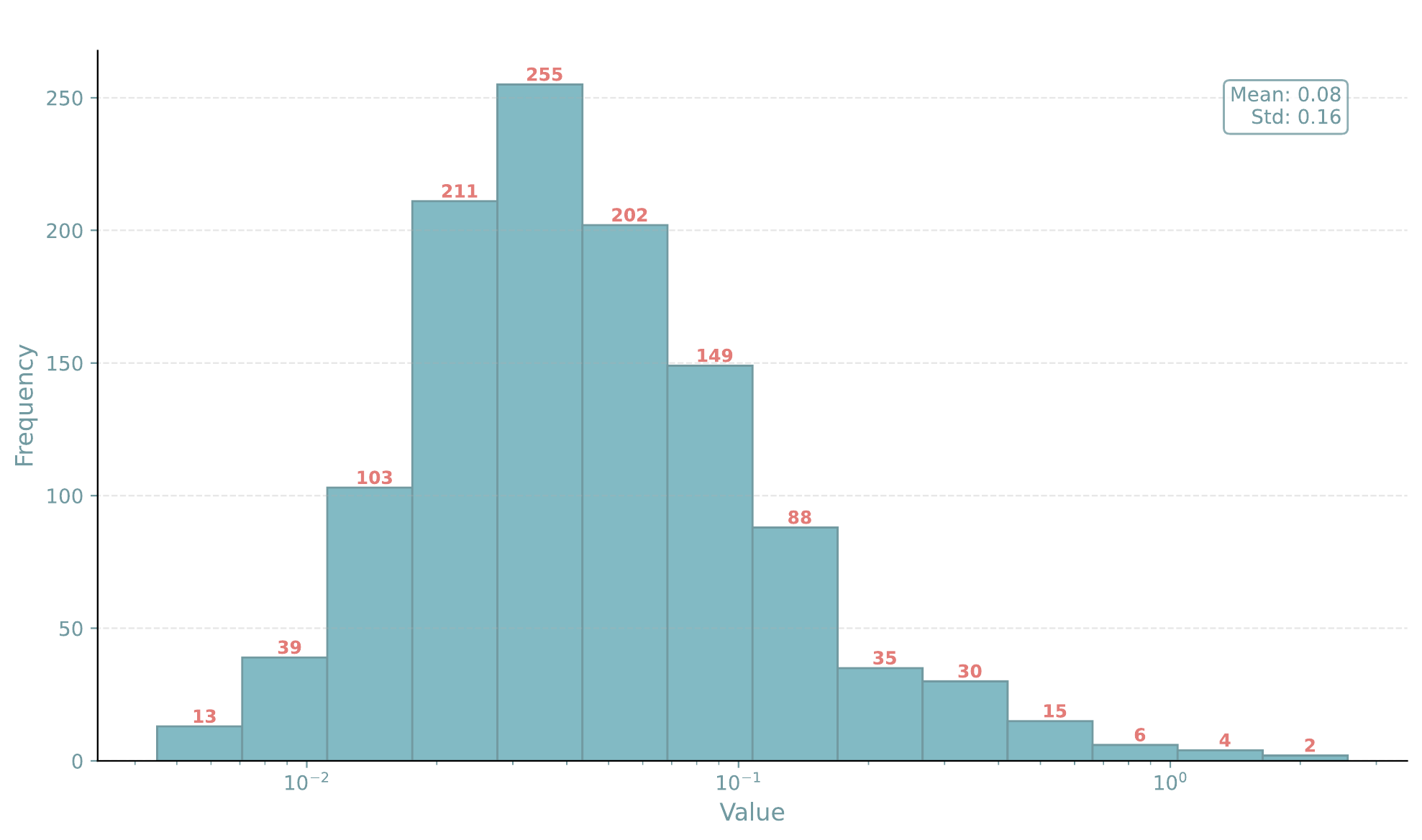}
    \caption{Distribution of singular values from a LoRA of rank 16 trained on the Food-101 image classification dataset. Similar plots from other datasets are presented in Figure~\ref{fig:appendix-hist}.}
    \label{fig:singular-values-dist}
\end{figure}

\subsection{Justification}
\label{sec:theory}

Analyzing the distribution of singular values of Low Rank Adapters (Fig.~\ref{fig:singular-values-dist}), we observe that most singular values are near zero, with a pronounced long tail. The spectral energy of the update is dominated by a handful of singular values that occupy the long tail of the distribution, implying that the effective rank of the update is much lower than the LoRA rank. This indicates that the LoRA has learned a few strong directions, while the remaining components contribute marginally and may represent memorization or noise. 

The use of singular values as an importance metric is grounded in the Eckart-Young-Mirsky theorem \citep{eym1, eym2}. The theorem provides the optimal low-rank approximation of a matrix. It states that the best rank-$k$ approximation of a matrix $A$, in both the Frobenius norm or any unitarily invariant norm, is obtained by truncating the Singular Value Decomposition (SVD) of $A$ after the $k$-th largest singular value. 

\begin{theorem}[Eckart-Young-Mirsky \cite{eym1, eym2}]
Let $\phi \in \mathbb{R}^{d_1 \times d_2}$ have singular value decomposition $\phi = U \Sigma V^\top$ with singular values $\sigma_1 \geq \sigma_2 \geq \cdots \geq \sigma_r > 0$. For any target rank $k < r$, the truncated SVD
\begin{equation}
    \phi_k = \sum_{i=1}^{k} \sigma_i u_i v_i^\top
\end{equation}
is the unique minimizer of $\|\phi - \hat{\phi}\|_F$ over all matrices $\hat{\phi}$ with $\text{rank}(\hat{\phi}) \leq k$. Moreover, the approximation error is exactly:
\begin{equation}
    \|\phi - \phi_k\|_F = \sqrt{\sum_{i=k+1}^{r} \sigma_i^2}
\end{equation}
\end{theorem}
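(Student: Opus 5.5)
The plan is to prove the error formula directly, then obtain optimality from Weyl's inequality for singular values, and finally treat uniqueness as an equality-case analysis. For the error formula, write $\phi - \phi_k = \sum_{i=k+1}^{r} \sigma_i u_i v_i^\top$. Since $\{u_i\}$ and $\{v_i\}$ are orthonormal families, this expression is itself an SVD. Using $\|X\|_F^2 = \operatorname{tr}(X^\top X)$ and the orthonormality relations $u_i^\top u_j = \delta_{ij}$ and $v_i^\top v_j = \delta_{ij}$, I get $\|\phi-\phi_k\|_F^2 = \sum_{i=k+1}^r \sigma_i^2$. The same computation shows that the Frobenius norm of any matrix equals the $\ell_2$ norm of its singular values, which the later steps will use.

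For optimality, fix any $\hat\phi$ with $\operatorname{rank}(\hat\phi)\le k$ and set $E=\phi-\hat\phi$. The key tool is Weyl's inequality for singular values: for any $X,Y$ and indices $i,j\ge 1$,
\begin{equation}
\sigma_{i+j-1}(X+Y)\le \sigma_i(X)+\sigma_j(Y).
\end{equation}
I would apply it with $X=E$, $Y=\hat\phi$ and $j=k+1$. Since $\sigma_{k+1}(\hat\phi)=0$, this gives $\sigma_{i+k}(\phi)\le \sigma_i(E)$ for every $i\ge 1$. Squaring and summing over $i=1,\dots,r-k$ then yields
\begin{equation}
\|E\|_F^2 \ge \sum_{i=1}^{r-k}\sigma_i(E)^2 \ge \sum_{i=k+1}^r \sigma_i^2 = \|\phi-\phi_k\|_F^2,
\end{equation}
which is the claimed minimality.

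To prove Weyl's inequality, I would use the Courant--Fischer characterization
\begin{equation}
\sigma_m(Z)=\min_{\dim S\le m-1}\ \max_{x\perp S,\ \|x\|=1}\|Zx\|.
\end{equation}
Choose subspaces $S_X$ of dimension $i-1$ and $S_Y$ of dimension $j-1$ that attain the minima for $X$ and $Y$. Set $S=S_X+S_Y$, which has dimension at most $i+j-2$. Every unit vector $x\perp S$ satisfies $\|(X+Y)x\|\le \|Xx\|+\|Yx\|\le \sigma_i(X)+\sigma_j(Y)$, which gives the inequality. I expect this step to need the most care, but it is standard, and I may simply cite it.

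The main obstacle is uniqueness. As stated, it is false when $\sigma_k=\sigma_{k+1}$: swapping $u_k v_k^\top$ for $u_{k+1}v_{k+1}^\top$ gives a different minimizer with the same error. I would therefore prove uniqueness under the added hypothesis $\sigma_k>\sigma_{k+1}$ and note this caveat in the text. Under the gap, equality in the chain above forces two things: $\sigma_i(E)=\sigma_{i+k}(\phi)$ for all $i$, and $E$ has rank at most $r-k$. I would then argue that $\hat\phi$ must map the span of $v_1,\dots,v_k$ onto the span of $u_1,\dots,u_k$ exactly as $\phi$ does. To do this, I would compare $\|E\|_F^2 = \|\phi\|_F^2 - 2\langle\phi,\hat\phi\rangle + \|\hat\phi\|_F^2$ with the bound $\langle \phi,\hat\phi\rangle \le \sum_{i\le k}\sigma_i\sigma_i(\hat\phi)$ from von Neumann's trace inequality. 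This gives $\|E\|_F^2 \ge \sum_{i\le k}(\sigma_i-\sigma_i(\hat\phi))^2+\sum_{i>k}\sigma_i^2$. Equality then requires $\sigma_i(\hat\phi)=\sigma_i$ for $i\le k$, together with equality in von Neumann's inequality. With the strict gap, that equality case pins the singular subspaces of $\hat\phi$ to those of $\phi$, so $\hat\phi=\phi_k$.
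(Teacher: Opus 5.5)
The paper gives no proof of this theorem; it is quoted with citations to Eckart--Young and Mirsky, so there is nothing in the paper to compare your argument against. On its own terms, your error formula and your optimality argument are correct. The Weyl step gives the singular-value-wise bound $\sigma_i(E)\ge\sigma_{i+k}(\phi)$. This is Mirsky's route, and it is stronger than what the theorem needs: by Fan's dominance theorem it gives optimality of $\phi_k$ in every unitarily invariant norm. That broader claim appears in the paper's prose but not in the theorem statement.

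You are also right that the uniqueness clause is false as stated. When $\sigma_k=\sigma_{k+1}$, ``the truncated SVD'' is not even well defined, and your swap produces a second minimizer. The correct statement needs the gap $\sigma_k>\sigma_{k+1}$.

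The weak point is the last sentence of your uniqueness paragraph, which carries all of the content.

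\begin{itemize}
\item Saying that equality in von Neumann's inequality ``pins the singular subspaces'' requires the nontrivial equality characterization. Equality $\langle\phi,\hat\phi\rangle=\sum_i\sigma_i(\phi)\sigma_i(\hat\phi)$ holds iff $\phi$ and $\hat\phi$ admit a simultaneous ordered SVD (Theobald, 1975).
\item Even granting that, you only get $\hat\phi=\sum_{i\le k}\sigma_i u_i' v_i'^\top$ for \emph{some} SVD $(u_i',v_i')$ of $\phi$. You must add that under the gap this sum does not depend on the chosen SVD. It equals $P_k\phi$, where $P_k$ is the spectral projector of $\phi\phi^\top$ onto eigenvalues $\ge\sigma_k^2$.
\end{itemize}

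Either cite both facts explicitly, or use an elementary argument that gives optimality and uniqueness at once:
\begin{itemize}
\item Let $P$ be the orthogonal projector onto a $k$-dimensional subspace containing the column space of $\hat\phi$. Then $\|\phi-\hat\phi\|_F^2=\|\phi\|_F^2-\|P\phi\|_F^2+\|P\phi-\hat\phi\|_F^2$.
\item Extend $\{u_i\}$ to an orthonormal basis and set $\sigma_i=0$ for $i>r$. Then $\|P\phi\|_F^2=\sum_i\sigma_i^2c_i$ with $c_i=\|Pu_i\|^2\in[0,1]$ and $\sum_{i=1}^{d_1}c_i=k$.
\item It follows that $\sum_{i\le k}\sigma_i^2-\|P\phi\|_F^2\ge(\sigma_k^2-\sigma_{k+1}^2)\sum_{i>k}c_i\ge 0$, which proves optimality.
\item Equality forces $\hat\phi=P\phi$. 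Under the gap it also forces $Pu_i=0$ for all $i>k$, so $P$ projects onto $\mathrm{span}(u_1,\dots,u_k)$ and $\hat\phi=\phi_k$.
\end{itemize}
This projection argument is Frobenius-specific, whereas your Weyl route extends to all unitarily invariant norms. Finally, the two consequences of equality in the Weyl chain that you list, $\sigma_i(E)=\sigma_{i+k}(\phi)$ and $\operatorname{rank}E\le r-k$, are never used afterwards. Drop them.
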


Each singular value $\sigma_i$ contributes exactly $\sigma_i^2$ to the squared Frobenius norm of the update, which measures the total magnitude of the weight modification. Pruning small singular values thus removes components that contribute minimally to the adaptation signal.




With PARA, compression is performed optimally in the Frobenius norm sense, and rank is allocated to layers in proportion to their spectral energy. Layers with concentrated spectral energy retain higher ranks, while layers with diffuse or negligible spectra are aggressively pruned or discarded entirely.

\subsection{Threshold Selection Policies}

While the global threshold $\tau$ dictates the compression rate, selecting its optimal value depends on the deployment constraints. We propose two distinct policies for determining $\tau$.

\subsubsection{$\gamma$-PARA} This policy mirrors that of other adaptive rank LoRA frameworks such as AdaLoRA, DoRA, etc where the practitioner is required to make a decision on the target rank based on deployment constraints. The optimal target average rank can be attained through a validation set guided search. Unlike LoRA's rank selection which requires mutliple training runs for different ranks, PARA is training-free and can generate LoRAs across a range of ranks much more efficiently. Given a rank preservation ratio $\gamma \in (0,1]$, we obtain the target average rank $\bar{r} = \gamma \cdot r$, and subsequently, the target parameter budget $\mathcal{B}_{tgt} = \bar{r} \cdot N \cdot |Y|$. The threshold is then calculated as follows:
\begin{equation} \label{eqn:tau-r}
    \tau_{\gamma} = \mathcal{E}^\downarrow[\mathcal{B}_{tgt}]
\end{equation}
where $\tau_{\gamma}$ is the $\mathcal{B}_{tgt}$-th element of the list of singular values sorted in descending order, $\mathcal{E}^\downarrow$.

\subsubsection{$\epsilon$-PARA} This policy relies solely on the intrinsic spectral properties of the adapters to define the global threshold. We define the total spectral energy as the sum of squared singular values from all adapter layers:
\begin{equation}
E_{total} = \sum_{i=1}^N \sum_{y \in Y} \sum_{\sigma \in \Sigma^y_i} \sigma^2
\end{equation}
Given a target preservation ratio $\epsilon \in (0, 1]$ (e.g., $\epsilon = 0.99$), we seek the maximum threshold $\tau$ such that the retained energy meets the target:
\begin{equation} \label{eqn:tau-e}
\tau_\epsilon = \max { \tau \mid \sum_{i=1}^N \sum_{y \in Y} \sum_{\sigma \in \Sigma^y_i} \sigma^2 \cdot \mathbb{I}(\sigma \ge \tau) \ge \epsilon \cdot E_{total} }
\end{equation}
This policy guarantees that the compressed model retains $\epsilon$-fraction of the update's information content in the Frobenius norm sense.



\begin{algorithm}[tb]
   \caption{Post-Optimization Adaptive Rank Allocation (PARA)}
   \label{alg:PARA}
\begin{algorithmic}
   \STATE {\bfseries Input:} LoRA adapters $\Phi$, \\ Rank Preservation Ratio $\gamma$ or Target Energy Preservation ratio $\epsilon$
   \STATE {\bfseries Output:} Compressed LoRA adapters $\hat{\Phi}$
   \STATE {\bfseries Init:} Singular value pool $\mathcal{E} \leftarrow \emptyset$ and cache $\mathcal{C} \leftarrow \emptyset$
   
   \STATE \textbf{\textit{Phase 1: Decomposition}}
   \FOR{each layer $i \in \{1, \dots, N\}$}
        \FOR{each layer type $y \in Y$}
           \STATE $Q_{B^y_i}, R_{B^y_i} \leftarrow \text{QR}(B^y_i)$
           \STATE $Q_{A^y_i}, R_{A^y_i} \leftarrow \text{QR}((A^y_i)^T)$
           \STATE $M^y_i \leftarrow R_{B^y_i} R_{A^y_i}^T$ 
           \STATE $\tilde{U}^y_i, \Sigma^y_i, (\tilde{V}^y_i)^T \leftarrow \text{SVD}(M^y_i)$
            \STATE $U^y_i \leftarrow Q_{B^y_i} \tilde{U}^y_i$
            \STATE $V^y_i \leftarrow Q_{A^y_i} \tilde{V}^y_i$
           \STATE $\mathcal{E} \leftarrow \mathcal{E} \cup \text{diag}(\Sigma^y_i)$ \COMMENT{Collect singular values}
           \STATE Store $(\tilde{U}^y_i, \tilde{V}^y_i, \Sigma^y_i)$ in $\mathcal{C}$
        \ENDFOR
   \ENDFOR
   
    \STATE \textbf{\textit{Phase 2: Pruning \& Reconstruction}}
   \STATE Calculate $\tau$ as given by Eqn.~\ref{eqn:tau-r} or Eqn.~\ref{eqn:tau-e} per policy 
      \FOR{each layer $i \in \{1, \dots, N\}$}
        \FOR{each layer type $y \in Y$}
           \STATE Retrieve $(\tilde{U}^y_i, \tilde{V}^y_i, \Sigma^y_i)$ from $\mathcal{C}$
           \STATE Construct $\hat{\Sigma}^y_i$ such that $(\hat{\Sigma}^y_i)_{jj} = (\Sigma^y_i)_{jj} \cdot \mathbb{I}((\Sigma^y_i)_{jj} \geq \tau)$
           \STATE $\hat{B}^y_i \leftarrow U^y_i \sqrt{\hat{\Sigma}^y_i}$
           \STATE $\hat{A}^y_i \leftarrow \sqrt{\hat{\Sigma}^y_i} (V^y_i)^T$
           \STATE $\hat{\Phi} \leftarrow \hat{\Phi} \cup \{ (\hat{B}^y_i, \hat{A}^y_i) \}$
        \ENDFOR
   \ENDFOR
   
   \STATE \textbf{return} $\hat{\Phi}$
\end{algorithmic}
\end{algorithm}


\section{Experiments}

We benchmark PARA under diverse settings such as standard image classification, natural language understanding, multi-rank deployment and model merging. In all our experiments, unless mentioned explicitly otherwise, we follow the $\gamma$-PARA policy with $\gamma = 0.25$, resulting in a compressed LoRA one fourth the original size. This is to maintain uniformity in number of parameters with our baselines. We employ the standard LoRA \cite{lora} as our first baseline. However, unlike the original implementation that only adapts attention matrices, our implementation applies LoRA adapters to all layers for maximum performance. In addition, we consider the following Adaptive Rank LoRA variants as baselines: AdaLoRA \cite{adalora}, SoRA \cite{sora}, DoRA \cite{dora}, and GoRA \cite{gora}. For the training-time adaptive rank frameworks, namely AdaLoRA, SoRA and DoRA, we start training at $r_{init}$ and progressively prune to $r_{tgt}$. For GoRA, which is an initialization method, we allocate a parameter budget equivalent to an average rank $r_{tgt}$. All experiments were conducted on a single Nvidia H100 GPU. Pruning schedules and other baseline configuration details are presented in the Appendix.



\subsection{Image Classification} \label{sec:img-class}
We benchmark PARA and the baselines on standard image classification datasets such as CIFAR-10 \cite{cifar}, CIFAR-100 \cite{cifar}, Eurosat \cite{eurosat1, eurosat2}, Oxford Flowers \cite{flowers}, Oxford-IIIT Pet \cite{pets}, Stanford Cars \cite{cars}, and Food-101 \cite{food}. For vision tasks, we adopt SigLIP2 Base's vision encoder \cite{siglip2} as the frozen backbone and set $r_{init} = 16$ and $r_{tgt} = 4$. Our empirical results (Tab.~\ref{tab:image-classification}) show that PARA outperforms LoRA and the adaptive LoRA variants AdaLoRA, SoRA and GoRA across all datasets. PARA performs better than DoRA in all but two datasets, namely Oxford-IIIT pet and Oxford Flowers, where it finishes second. Fig~\ref{fig:cars-dual} demonstrates $\epsilon$-PARA on the Stanford Cars Dataset. The rank exponentially decays with reducing energy but performance remains stable until a certain point, after which it drops. Interestingly, we see that the higher rank compressed adapters perform slightly better than the parent adapter. This could be attributed to the spectral pruning eliminating noise, resulting in a clearer signal. We observe consistent behaviors across other image classification datasets as well (Fig.~\ref{fig:appendix-dual}).

Conversely, we plot Fig. \ref{fig:cars-contrast}. The energy–rank curve shows that most performance is retained while aggressively reducing rank when discarding low-energy (trailing) singular directions, whereas the reverse ablation demonstrates that removing a small number of high-energy singular directions catastrophically degrades accuracy; together, these results indicate that LoRA updates are dominated by a few task-critical singular components, making trailing-singular-value pruning a reliable and principled compression strategy.

\begin{figure}
    \centering
    \includegraphics[width=0.85\linewidth]{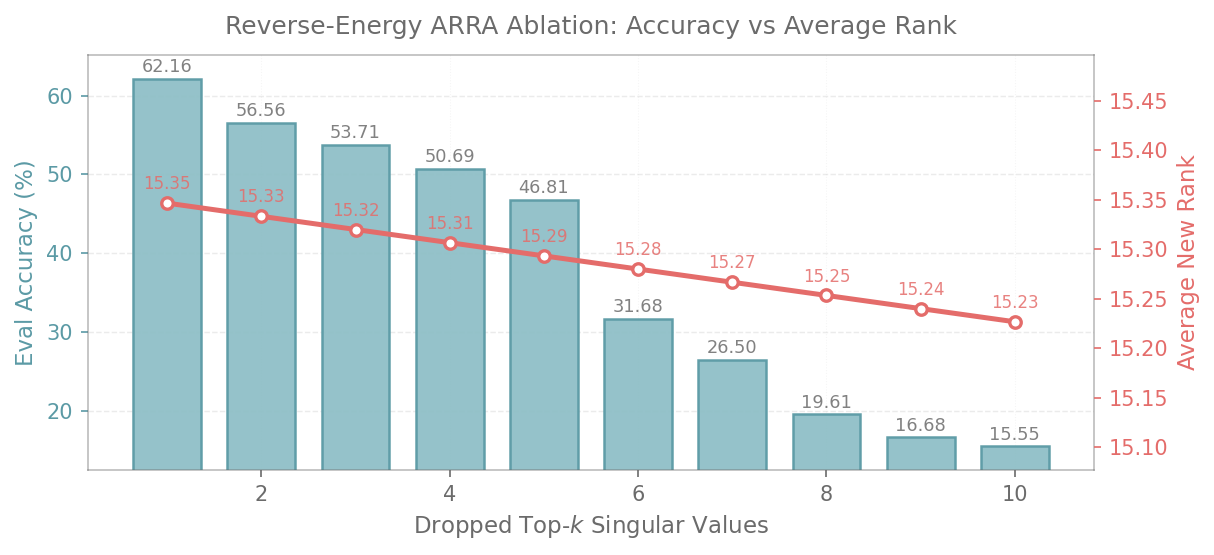}
    \caption{Plot denoting Accuracy (blue bars) and Rank (red dots) at various compression levels where top K singular values are dropped (Uniform $\epsilon$-PARA compression not feasible due to high concentration of energy in principal singular values) of LoRA of rank 16 trained on the Stanford Cars Image Classification dataset.}
    \label{fig:cars-contrast}
\end{figure}

\begin{table*} 
\centering
\caption{Accuracy Results of PARA and baselines on Image Classification Tasks. PARA outperforms baselines and scores highest (bold) or second-highest across datasets.}
\label{tab:image-classification}
\begin{tabular}{@{}lcccccccc@{}}
\toprule
\textbf{Method} & \multicolumn{1}{l}{\textbf{CIFAR 10}} & \multicolumn{1}{l}{\textbf{CIFAR 100}} & \multicolumn{1}{l}{\textbf{Pets}} & \multicolumn{1}{l}{\textbf{Flowers}} & \multicolumn{1}{l}{\textbf{Cars}} & \multicolumn{1}{l}{\textbf{Food}} & \multicolumn{1}{l}{\textbf{Eurosat}} & \multicolumn{1}{l}{\textbf{Average}} \\ \midrule
LoRA            & 97.53                                 & 68.11                                  & 87.08                             & 82.18                                & 78.55                             & 79.43                             & 97.22                                & 84.30                                 \\
AdaLoRA         & 96.09                                 & 50.8                                   & 88.5                              & 84.14                                & 54.58                             & 68.43                             & 96.22                                & 76.96                          \\
SoRA            & 96.24                                     & 60.12                                      & 82.17                                 & 83.41                                    & 64.38                                 & 72.19                                 & 96.22                                    & 79.25                                    \\
DoRA            & 96.51                                 & 64.48                                  & \textbf{90.49}                    & \textbf{87.87}                       & 66.63                             & 74.49                             & 96.30                                 & 82.39                          \\
GoRA            & 96.91                                 & 67.76                                  & 83.35                             & 79.8                                 & 71.10                              & 80.66                             & 97.56                                & 82.45                          \\
PARA            & \textbf{97.89}                        & \textbf{79.08}                         & 89.97                             & 86.03                                & \textbf{84.58}                    & \textbf{86.4}                     & \textbf{97.85}                       & \textbf{88.83}                          \\ \bottomrule
\end{tabular}
\end{table*}

\subsection{Natural Language Understanding} \label{sec:nlu}
For natural language understanding tasks, we benchmark on the GLUE tasks \cite{glue} MNLI \cite{mnli}, SST-2 \cite{sst2}, CoLA \cite{cola}, QNLI \cite{qnli}, and MRPC \cite{mrpc}. We use RoBERTa Base \cite{roberta} as the frozen backbone model. Following Sec~\ref{sec:img-class}, we set $r_{init}=16$ and $r_{tgt} = 4$. Empirical results from the NLU tasks (Tab.\ref{tab:natural-language-understanding}) also paint a picture similar to that of Image Classification, where PARA outperforms LoRA and its adaptive rank variants. $\epsilon$-PARA behaves in a manner similar to Image Classification (Fig.~\ref{fig:appendix-dual}) where the rank decays with reducing spectral energy, enabling high compression ratios whilst preserving performance. 

\begin{figure}
    \centering
    \includegraphics[width=0.85\linewidth]{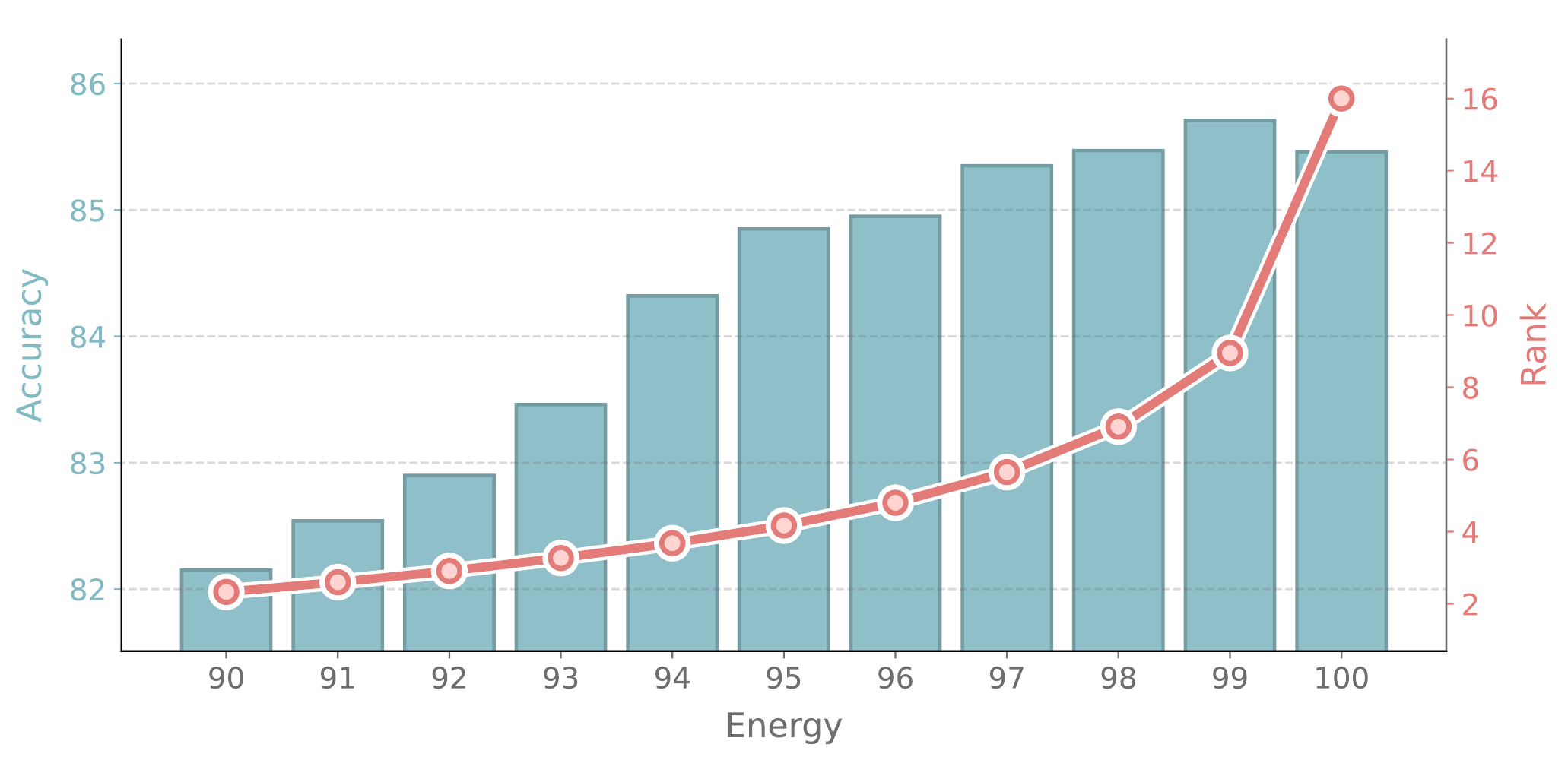}
    \caption{Plot denoting Accuracy (blue bars) and Rank (red dots) at various compression levels as a percentage of Total Spectral Energy of LoRA of rank 16 trained on the Stanford Cars Image Classification dataset.}
    \label{fig:cars-dual}
\end{figure}

\begin{table*}[]
\centering
\caption{Accuracy Results of PARA and baselines with RoBERTA-Base on Natural Language Understanding Tasks. PARA outperforms baselines and scores highest (bold) across datasets.}
\label{tab:natural-language-understanding}
\begin{tabular}{@{}lccccccc@{}}
\toprule
\textbf{Method} & \multicolumn{1}{l} {\textbf{MNLI(M)}} & \multicolumn{1}{l}{\textbf{MNLI(U)}} & \multicolumn{1}{l}{\textbf{SST-2}} & \multicolumn{1}{l}{\textbf{CoLA}} & \multicolumn{1}{l}{\textbf{QNLI}} & \multicolumn{1}{l}{\textbf{MRPC}} & \multicolumn{1}{l}{\textbf{Average}} \\
\midrule
LoRA            & 81.69                                 & 82.38                                  & 92.89                              & 82.17                             & 88.14                             & 86.27                             & 85.59                                \\
AdaLoRA         & 50.86                                 & 51.88                                  & 90.83                              & 79.19                             & 56.21                             & 82.35                             & 68.55                          \\
SoRA            & 72.17                                     &       74.32                               & 91.16                                  & 79.19                                 & 83.72                                 & 84.26                                 & 80.80                                    \\
DoRA            & 75.29                                 & 76.52                                  & 91.28                              & 79.77                             & 83.36                             & 83.82                             & 81.67                    \\
GoRA            & 81.37                                 & 81.78                                  & 91.97                              & 78.62                             & 86.49                             & \textbf{86.76}                    & 84.49                          \\
PARA            & \textbf{83.84}                        & \textbf{83.7}                          & \textbf{93.46}                     & \textbf{82.26}                    & \textbf{88.71}                    & \textbf{86.76}                    & \textbf{86.45}   \\   \bottomrule          
\end{tabular}
\end{table*}

\subsection{Commonsense Reasoning} \label{sec:commonsense}
To test commonsense reasoning, we adopt the Instruction Tuned Gemma3-4B \cite{gemma3} model as our frozen backbone. We train LoRA and its adaptive variants on the Commonsense-170k dataset \cite{commonsense170k} and test them across eight downstream tasks namely BoolQ \cite{boolq}, PIQA \cite{piqa}, SIQA \cite{siqa}, HellaSwag \cite{hellaswag}, WinoGrande \cite{winogrande}, ARC-C \cite{arc}, ARC-E \cite{arc} and OBQA \cite{obqa}. We set $r_{init} = 16$ and $r_{tgt} = 4$. Our empirical results demonstrate that PARA outperforms baselines in most benchmarks and attains the highest average accuracy. 

\begin{table*}[]
\centering
\caption{Accuracy Results of PARA and baselines with Gemma3-4B on Commonsense Reasoning Tasks. PARA outperforms baselines and scores highest (bold) or second-highest across datasets.}
\label{tab:commonsense-reasoning}
\begin{tabular}{@{}lccccccccc@{}}
\toprule
\textbf{Method} & \multicolumn{1}{l}{\textbf{BoolQ}} & \multicolumn{1}{l}{\textbf{PIQA}} & \multicolumn{1}{l}{\textbf{SIQA}} & \multicolumn{1}{l}{\textbf{HellaSwag}} & \multicolumn{1}{l}{\textbf{WinoGrande}} & \multicolumn{1}{l}{\textbf{ARC-C}} & \multicolumn{1}{l}{\textbf{ARC-E}} & \multicolumn{1}{l}{\textbf{OBQA}} & \multicolumn{1}{l}{\textbf{Average}} \\ \midrule
LoRA            & 83.6                               & 73.2                              & 49.8                              & 47.8                                   & 74                                      & 43.6                               & 54.4                               & 43.2                              & 58.7                                 \\
AdaLoRA         & 81.2                               & 70.3                              & 48.4                              & 46.5                                   & 71.6                                    & 43.4                               & 52.8                               & 42.4                              & 57.1                                 \\
SoRA            & 80.4                               & 69.1                              & 50.4                              & 45.9                                   & 70.2                                    & 43.2                               & 56.1                               & 41.8                              & 57.1                                 \\
DoRA            & 81.8                               & 68.2                              & \textbf{52.6 }                             & 47.3                                   & 70.2                                    & 42.8                               & \textbf{58.9}                               & 39.2                              & 57.6                                 \\
GoRA            & 83                                 & 67.6                              & 51.8                              & 46.8                                   & 70.4                                    & 43.2                               & 46.7                               & 40.4                              & 56.2                                 \\
PARA            & \textbf{84.4}                               & \textbf{74.8}                              & 50.2                              & \textbf{48.1}                                   & \textbf{74.2}                                    & \textbf{47.6  }                             & 58.2                               & \textbf{45.4 }                             & \textbf{60.4  }                               \\ \bottomrule
\end{tabular}
\end{table*}

\subsection{Mathematical Reasoning}
To test PARA and the baselines on Mathematical Reasoning tasks, we train on the MetaMathQA \cite{metamathqa} and test on GSM-8K \cite{gsm8k} and MATH \cite{math} benchmarks. Following Sec~\ref{sec:commonsense}, we use Gemma3-4B (IT) \cite{gemma3} as our backbone with $r_{init} = 16$ and $r_{tgt} = 4$. From Tab.\ref{tab:mathematical-reasoning}, we observe that PARA attains the highest performance in all benchmarks.

\begin{table}[] 
\centering
\caption{Accuracy Results of PARA and baselines with Gemma3-4B on Mathematical Reasoning Tasks. PARA outperforms baselines and scores highest (bold) across datasets.}
\label{tab:mathematical-reasoning}
\begin{tabular}{lrrr}
\toprule 
\textbf{Method} & \multicolumn{1}{l}{\textbf{GSM 8K}} & \multicolumn{1}{l}{\textbf{MATH}} &  \multicolumn{1}{l}{\textbf{Average}} \\ \midrule
LoRA            & 77.5                                & 44.1                              & 60.8                                 \\
AdaLoRA         & 70                                  & 42.3                              & 56.15                                \\
SoRA            & 71.2                                & 40.6                              & 55.9                                 \\
DoRA            & 72.4                                & 41.8                              & 57.1                                 \\
GoRA            & 76.5                                & 43.2                              & 59.85                                \\
PARA            & \textbf{78.2}                                & \textbf{45.5}                              & \textbf{61.8}                                                                 \\ \bottomrule
\end{tabular}
\end{table}

\begin{figure}
    \centering
    \includegraphics[width=0.9\linewidth]{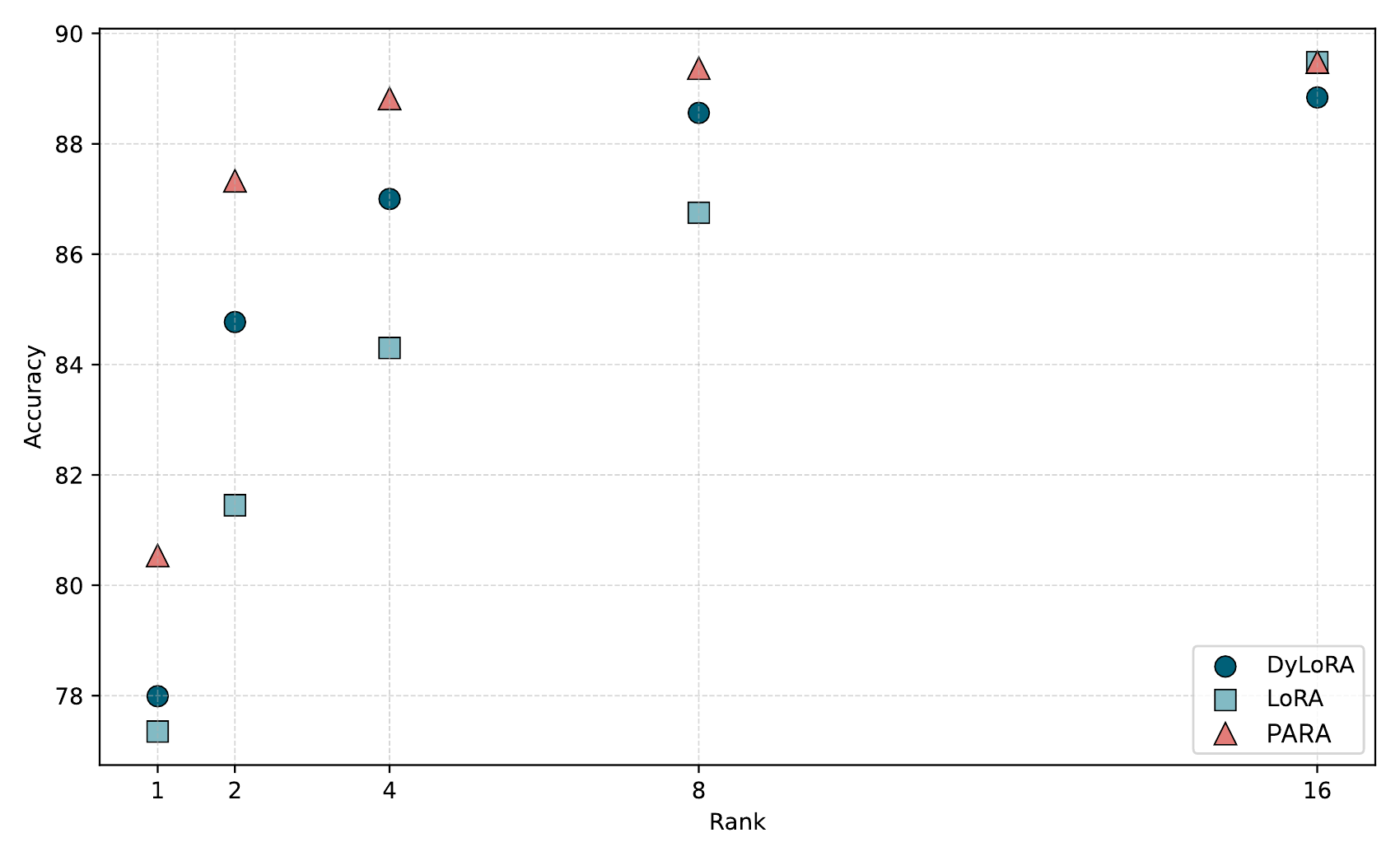}
    \caption{Scatter plot comparing PARA, DyLoRA and LoRA. Presented results are averaged across all image classification benchmarks as laid out in Sec.~\ref{sec:img-class}.}
    \label{fig:dylora-img}
\end{figure}

\subsection{Multi-Rank Deployment}
In multi-tenant serving systems, adapter swapping is a bandwidth bottleneck. Typically, supporting clients with different latency constraints requires training and storing multiple adapters at distinct ranks. We train a single high-rank LoRA and use PARA to generate a family of compressed LoRAs. This reduces storage costs and allows real-time serving adjustments based on current GPU load without retraining. DyLoRA \cite{dylora} simultaneously trains a range of ranks and enables dynamic rank selection at inference. Note that, unlike PARA which generates adaptive rank LoRAs, DyLoRA only supports standard, uniform-rank LoRAs. We compare LoRAs generated by PARA from a parent LoRA of rank 16 with DyLoRA trained in a range of ranks from 1 to 16 and LoRAs trained natively at ranks 1, 2, 4, 8 and 16. We present results from both the Image Classification (Fig.~\ref{fig:dylora-img}) and Natural Language Understanding (Fig.~\ref{fig:dylora-nlu}) benchmark suites.

\begin{figure}
    \centering
    \includegraphics[width=0.9\linewidth]{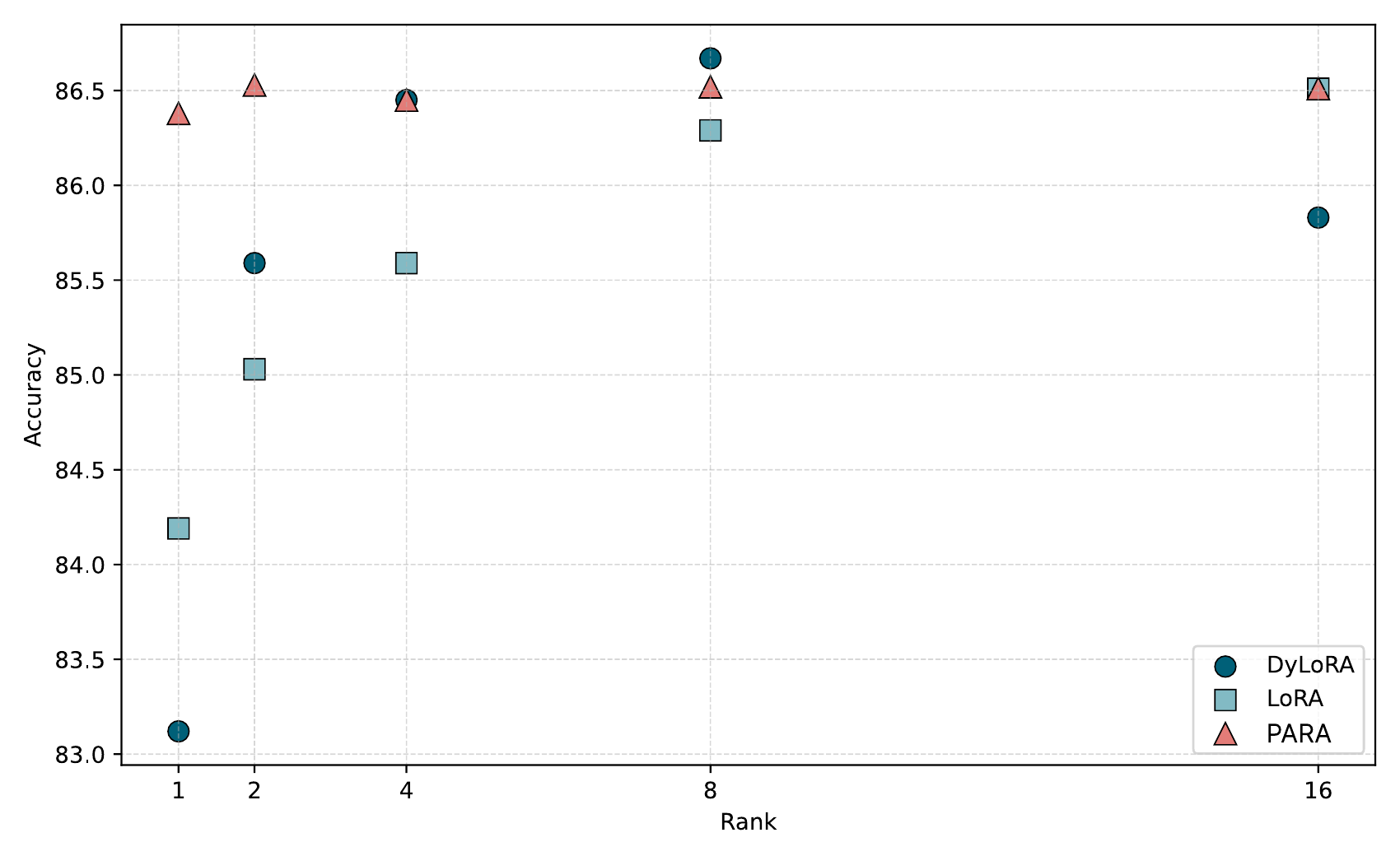}
    \caption{Scatter plot comparing PARA, DyLoRA and LoRA. Presented results are averaged across all natural language understanding benchmarks as laid out in Sec.~\ref{sec:nlu}}
    \label{fig:dylora-nlu}
\end{figure}

A consistent trend in both image classification and natural language understanding is that both DyLoRA and PARA-derived adapters often outperform LoRAs trained natively at the same target rank. We hypothesize that training a high-rank adapter provides additional degrees of freedom that make subspace discovery easier during optimization. The learned update typically concentrates most of its energy in a small number of dominant directions, so truncation recovers these task-relevant components while discarding low-energy variance. DyLoRA employs a similar idea during training via rank sampling and prefix-ordered components, which implicitly regularizes the adapter and encourages useful information to be represented in early rank dimensions, improving performance at low truncation ranks relative to independently trained low-rank LoRAs. Empirically, DyLoRA reports higher accuracy over LoRA at matched ranks, while PARA outperforms both native LoRAs and DyLoRA.

\subsection{Ablations}

\subsubsection{Fisher Importance}
Optimal compression necessitates the identification of parameters whose removal minimally impacts model performance. This sensitivity is formally quantified via the second-order Taylor expansion of the loss function, where the governing component is the Hessian matrix ($\mathbf{H}$) containing the second-order partial derivatives. Fundamentally, the Hessian measures the local ``curvature'' of the loss landscape: high curvature values indicate that the model resides in a steep valley with respect to a specific parameter, where even slight perturbations cause significant increases in loss, thereby deeming the parameter \textit{important}. Conversely, low curvature implies a flat plateau where parameters can be modified or removed with negligible effect, rendering them \textit{unimportant}. Since computing the exact full Hessian for billion-parameter models is computationally intractable, practitioners typically utilize the Empirical Fisher Information Matrix (FIM) as a positive semi-definite approximation.  Furthermore, to alleviate memory constraints, it is standard practice to ignore correlations between parameters by calculating only the diagonal of the Fisher matrix, effectively treating the importance of each parameter independently.

A central premise of PARA is that singular values serve as a sufficient proxy for parameter importance within the low-rank subspace, circumventing the need for computationally expensive Hessian or Fisher Information Matrix (FIM) approximations. To validate this, we compare PARA against a \textit{Fisher-PARA} baseline, where the importance score for a given rank is derived from the FIM of its constituent parameters. 

Ignoring layer indices for brevity, for a decomposed LoRA adapter $\phi = U \Sigma V^T$ where $U \in \mathbb{R}^{d_1 \times r}$, $\Sigma \in \mathbb{R}^{r \times r}$, and $V \in \mathbb{R}^{d_2 \times r}$, we define the Fisher Importance Score $\mathcal{I}_j$ for the $j$-th rank component as the sum of the empirical Fisher information of its corresponding singular vectors and singular value:
$$\mathcal{I}_j = \sum_{p=1}^{d_1} \hat{F}(U_{pj}) + \hat{F}(\Sigma_{jj}) + \sum_{q=1}^{d_2} \hat{F}(V_{qj})$$
where $\hat{F}(\omega)$ represents the diagonal of the empirical Fisher Information Matrix for a specific parameter $\omega$, calculated over a validation set $\mathcal{D}_{val}$:
$$\hat{F}(\omega) = \frac{1}{|\mathcal{D}_{val}|} \sum_{n=1}^{N} \left( \frac{\partial \mathcal{L}(x_n, y_n; \Theta)}{\partial \omega} \right)^2$$

As shown in Figure~\ref{fig:fisher-compare}, PARA achieves performance parity with Fisher-PARA across all benchmarks. Points lying near the $y=x$ diagonal indicate that our singular-value proxy captures the same essential importance signals as the gradient-based Fisher metric. Crucially, PARA eliminates the need for the 50-batch gradient computation required by Fisher, providing a truly data-free and significantly faster compression strategy. 

\subsubsection{Local Pruning}
PARA employs a global parameter budget $\mathcal{B}_{tgt}$, allowing rank to vary dynamically across layers based on spectral significance. We compare this against a \textit{Local Pruning} baseline, which enforces a uniform rank $r_{local} = \frac{\mathcal{B}_{tgt}}{\text{layers}}$ across the entire network. 

The Global policy consistently outperforms the Local policy, particularly at high compression ratios. We observe that PARA tends to allocate higher ranks to specific layers, suggesting that task-specific knowledge is not uniformly distributed. This affirms that a global spectral threshold effectively reallocates the parameter budget to where it is most needed. 

\begin{table}[t]
\caption{Results (Accuracy \%) comparing PARA (Global pruning) and Local pruning.}
\centering
\begin{tabular}{@{}lcccc@{}}
\toprule
\multicolumn{5}{l}{\textit{Image Classification}} \\ 
\midrule
              & \textbf{CIFAR 100} & \textbf{Food} & \textbf{Flowers} & \textbf{Cars} \\ 
Local Pruning & 78.23              & 84.49         & 85.15            & 83.82         \\
PARA          & 79.08              & 86.40         & 86.03            & 84.58         \\

\midrule
\multicolumn{5}{l}{\textit{Natural Language Understanding}} \\ 
\midrule
              & \textbf{QNLI} & \textbf{MRPC} & \textbf{CoLA} & \textbf{SST-2} \\ 
Local Pruning & 86.85              & 83.17         & 81.64            & 92.22         \\
PARA          & 88.71              & 86.76         & 82.26            & 93.46         \\
\bottomrule
\end{tabular}
\end
{table}

\begin{figure}
    \centering
    \includegraphics[width=0.90\linewidth]{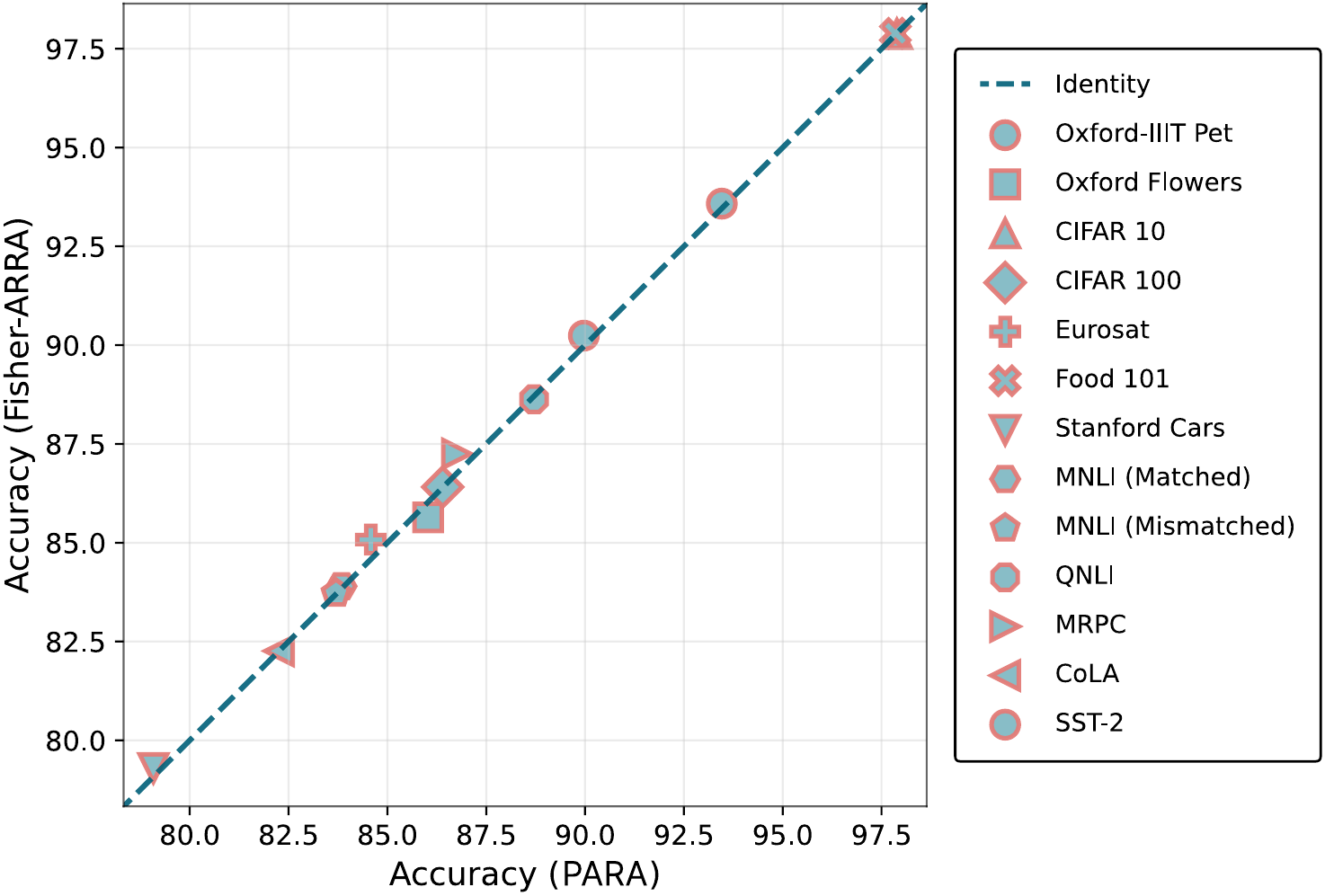}
    \caption{Scatter plot comparing PARA and Fisher-PARA accuracies across image classification and natural language understanding datasets.}
    \label{fig:fisher-compare}
\end{figure}

\section{Conclusion}

In this work, we introduce the Post-Optimization Adaptive Rank Allocation (PARA), a post-optimization compression framework that decouples the training rank from the inference rank without the need for complex regularization or training instability. PARA efficiently identifies and prunes redundant spectral components, serving as a data-free proxy for parameter importance that matches the efficacy of Fisher Information metrics. Our empirical results across diverse vision and language benchmarks demonstrate that PARA consistently outperforms existing adaptive baselines while enabling a practical "Train First, Tune Later" paradigm. Ultimately, PARA provides a robust solution for multi-tenant serving environments, allowing practitioners to derive a family of lightweight, hardware-compliant adapters from a single high-capacity parent adapter.

\clearpage

\bibliography{main}


\clearpage

\appendix
\section*{Appendix}

\begin{figure*}
    \centering
    \includegraphics[width=0.99\linewidth]{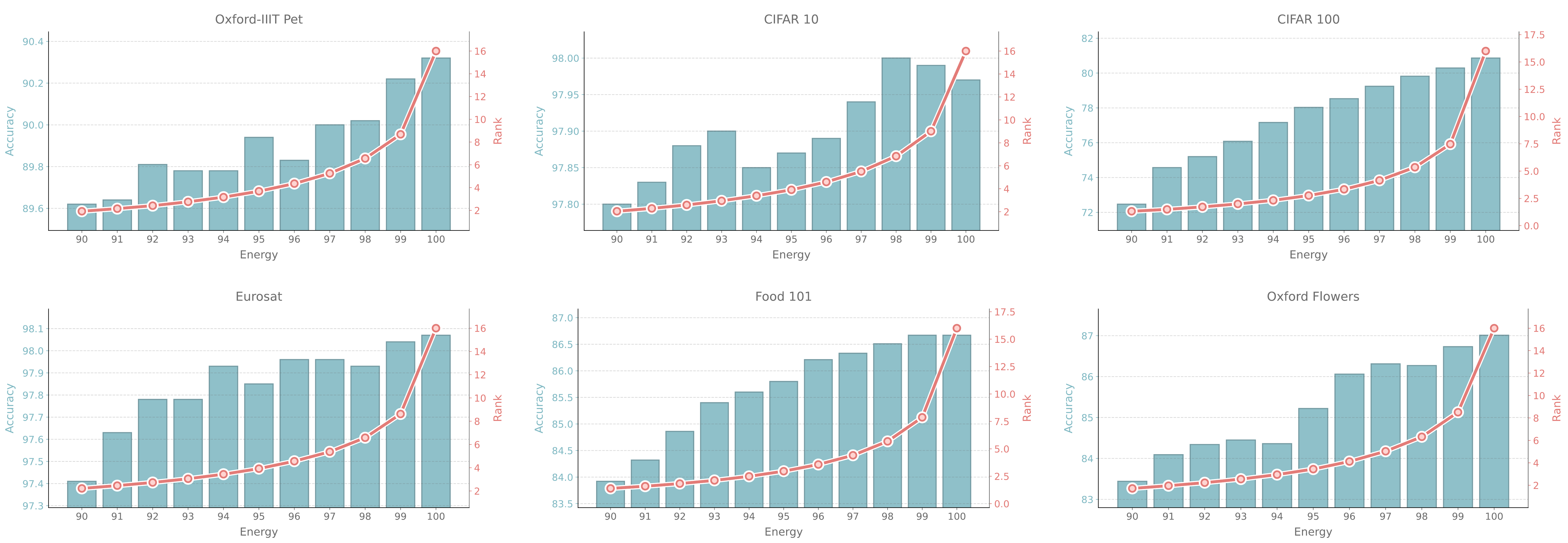}
    \caption{Accuracy vs rank during Energy based Compression on Image Classification benchmarks.}
    \label{fig:appendix-dual}
\end{figure*}

\begin{figure*}
    \centering
    \includegraphics[width=0.99\linewidth]{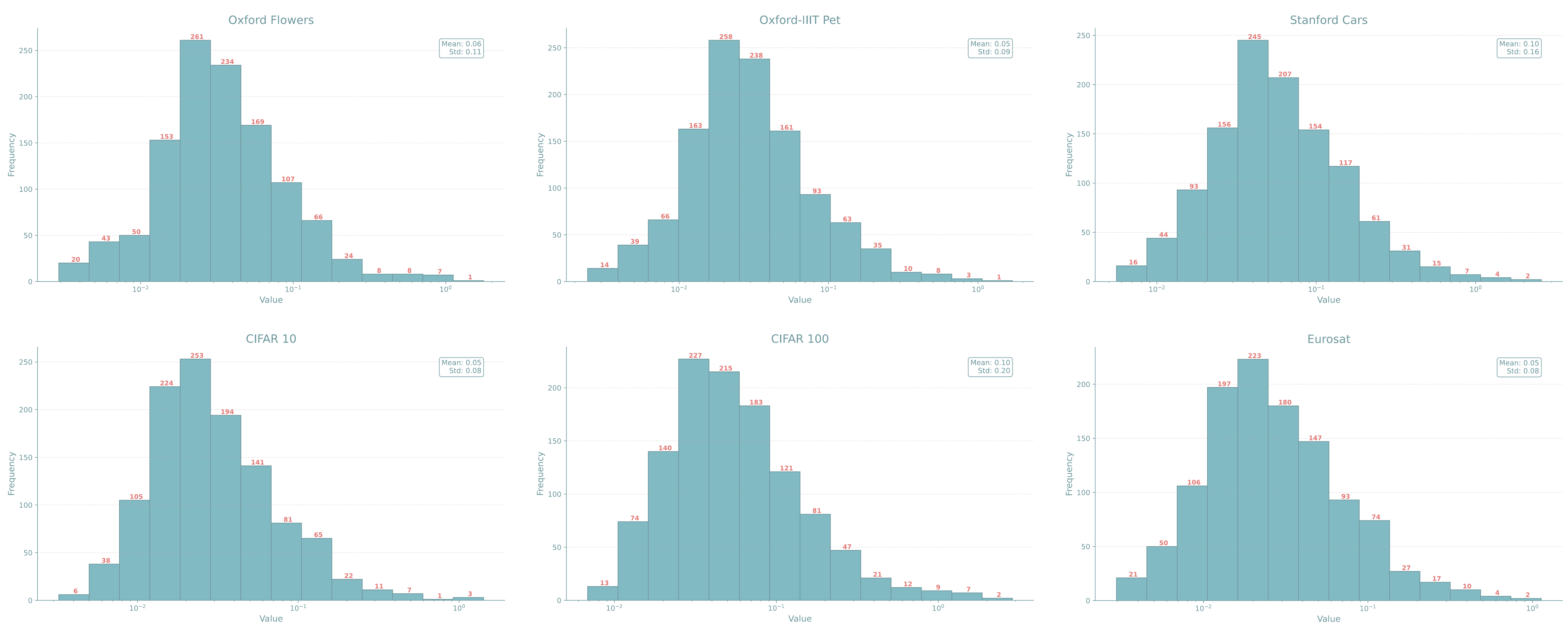}
    \caption{Distribution of singular values in LoRAs trained on different image classification datasets.}
    \label{fig:appendix-hist}
\end{figure*}

\section{Implementation Details}

We compare against AdaLoRA, DoRA, DyLoRA, SoRA, and GoRA using their standard settings: AdaLoRA and DoRA start from rank 16 and adapt toward rank 4 with a scheduled transition (\(t_{\text{init}}=500\), \(t_{\text{final}}=500\), \(\Delta T=500\)); DoRA additionally uses \(\eta_{\mathrm{DEM}}=0.3\) and \(\beta=0.9\), while AdaLoRA uses an orthogonality regularizer of \(10^{-5}\). DyLoRA spans ranks from 1 to 16 and evaluates \(\{1,2,4,8,16\}\). SoRA uses \(\lambda=0.1\), \(\texttt{gate\_lr}=10^{-3}\), \(\texttt{warmup}=500\), and \(\texttt{target\_sparsity}=0.7\). GoRA is configured with \(\texttt{ref\_r}=4\), \(\texttt{init\_batches}=64\), and \(\gamma=0.05\). Unless noted (e.g., SoRA gating), all baselines use the same optimizer and learning rate as the main image-classification setup (\(\text{AdamW},\, 5\times 10^{-5}\)).

\end{document}